\documentclass[a4paper]{svproc}

\usepackage{url}

\usepackage{mathptmx}          
\usepackage[T1]{fontenc}
\usepackage[utf8]{inputenc}
\usepackage{amsmath,amssymb}
\usepackage{graphicx}
\usepackage{booktabs}
\usepackage{multirow}
\usepackage{microtype}
\usepackage[table]{xcolor}
\usepackage{algorithm}
\usepackage{algpseudocode}
\usepackage{hyperref}
\usepackage{cleveref}
\usepackage[inline]{enumitem}
\usepackage{cleveref}
\usepackage{cite}
\usepackage[font=small]{caption}
\usepackage{xspace}

\newcommand{\Lam}{\Lambda}               
\newcommand{\err}{\hat{e}}              
\newcommand{\AoI}{\text{AoI}}  
\newcommand{\AoIbar}{\overline{\text{AoI}}}  
\newcommand{\Xnorm}{\mathcal{X}}         
\newcommand{\dmax}{d_{\max}}             
\newcommand{\vt}{v_t}                    
\newcommand{\vr}{v_r}                    
\newcommand{\rc}{r_c}                    
\newcommand{\rs}{r_s}                    
\newcommand{\area}{\mathcal{A}}          
\newcommand{\Eop}{\mathbb{E}}            
\newcommand{\lenc}{\lambda_{\text{enc}}} 
\newcommand{\li}{\lambda_{i}}            

\begin{document}
\mainmatter              
\title{A Kinetic Theory of Encounter-Based Information Propagation in Multi-Robot Systems}
\titlerunning{A Kinetic Theory of Encounter-Based Information Propagation in MRS}  
\author{Alkesh K. Srivastava \and
 Philip Dames }
\authorrunning{A.K. Srivastava and P. Dames}
%
\institute{Temple University, Philadelphia, PA, 19122 USA\\
\email{\{alkesh, pdames\}@temple.edu}}

\maketitle              
\begin{abstract}
Multi-robot systems cannot assume persistent network connectivity. We study this problem through target tracking, where performance depends on how quickly target information is sensed, transported through the team, and used before it becomes stale. When robots exchange information only through physical encounters, tracking becomes a kinetic information-transport problem: robot motion induces encounters, encounters carry target-state estimates, information age determines staleness, and stale information produces tracking error. This paper develops a kinetic theory of encounter-based information propagation and identifies three limits. The first is an \textit{access limit}---information cannot support team-level coordination unless it spreads beyond the robots that sensed it. The second is a \textit{staleness limit}---even propagated information loses value as the target moves. The third is a \textit{geometry limit}---when target motion outpaces information transport, tracking error approaches a saturation regime where communication improvements alone have diminishing returns. We evaluate the theory through large-scale simulations varying team size, operating area, communication range, and target speed. Results support the proposed access--staleness--geometry decomposition: communication coverage governs the access transition; once information is accessible, tracking error is shaped by target displacement; and this response is locally linear in restricted regimes but nonlinear over broader ranges because of sensing refreshes and bounded geometry. Across controlled sweeps and joint variation, the derived access and staleness coordinates reliably describe tracking performance. Together, these results establish a kinetic-theoretic framework for predicting and designing encounter-based multi-robot systems.
\end{abstract}

\section{Introduction}
\label{sec:intro}

Multi-robot systems increasingly operate in environments where communication is intermittent, local, or denied. In subterranean search-and-rescue, underwater robotics, cluttered urban settings, and contested environments, robots may exchange information only when they come into physical proximity. In such settings, mobility is not only a means of reaching sensing locations; it becomes the communication channel itself. 
Much of the literature on communication-constrained multi-robot systems focuses on how to plan, estimate, or coordinate under degraded communication, including communication-aware information gathering, distributed estimation, path-based sensing, and resilient multi-robot control~\cite{schwager2016multi,julian2012distributed,stachura2017communication,Otte.Sofge.TASE21,srivastava2022distributed,mcguire2024valuing,srivastava2025behaviorally,prorok2021beyond,liu2024multi, srivastava2026bayesian}. These works provide algorithms for operating when communication is limited. In this paper, we ask a more grounded question: \textit{when communication occurs only through proximity encounters, what determines whether information can propagate at all, whether it arrives too late to be useful, and when additional communication can no longer improve tracking?}

{Transport and diffusion ideas from physics have long been used to explain how local interactions produce macroscopic behavior, from mixing in physical systems to coverage and exploration in robotics. We draw on this perspective for encounter-based multi-robot coordination, where robot motion induces proximity encounters that move timestamped target estimates through the team. As these estimates age, target motion makes them stale and produces tracking error. In this view, encounter-based communication is a mobility-driven transport process shaped by motion, geometry, and mixing. This gives a principled route from motion and geometry to encounter rates, from encounter rates to Age of Information (AoI), and from AoI to task performance. Figure~\ref{fig:kinetic_analogy} summarizes this analogy and previews the three limiting mechanisms studied in this paper.}

\begin{figure}[t]
    \centering
    \includegraphics[width=0.95\textwidth]{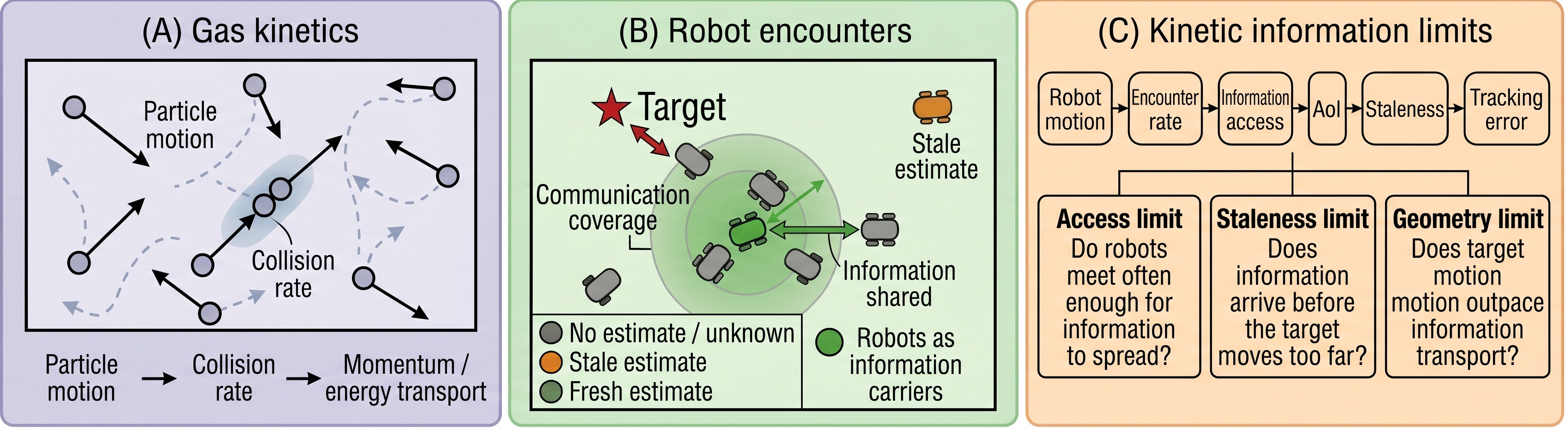}
    \caption{
    Kinetic analogy for encounter-based information propagation.
    (A) In classical kinetic theory, particle motion induces collisions that transport physical quantities such as momentum and energy.
    (B) In encounter-based robot teams, robot motion induces proximity encounters that transport timestamped target-state estimates.
    (C) This analogy yields a kinetic information-transport chain from robot motion to encounter rate, information access, Age of Information (AoI), staleness, and tracking error, producing access, staleness, and geometry limits.
    }
    \label{fig:kinetic_analogy}
\end{figure}

This perspective connects, but is distinct from, several existing lines of work. AoI measures information freshness in queues, wireless systems, sensor networks, and gossip processes~\cite{yates2021age,kosta2017age,hasan2025vaoi,barcis2020information}, while gossip and delay-tolerant networking study how information spreads through intermittent contacts~\cite{boyd2005gossip,haas2002gossip,spyropoulos2008efficient}. Percolation and random geometric graph models characterize connectivity transitions~\cite{stauffer2018introduction}. Kinetic and statistical-physics methods model collective systems through density, velocity, alignment, and collision-like interactions~\cite{aceves2019hydrodynamic,boltz2024kinetic,bellomo2026new}. On the other hand, we use kinetic encounter reasoning to connect physical robot motion to task-level tracking performance. The transported quantity is not mass or momentum, but timestamped target-state information.

The resulting kinetic theory predicts three limiting mechanisms that define a low-dimensional regime map for prediction and design. An \emph{access limit} arises when encounter opportunities are too sparse for target observations to spread through the team, motivating communication coverage as the coordinate for information accessibility. A \emph{staleness limit} arises when information propagates but arrives after the target has moved far enough that the estimate has lost value, motivating normalized staleness as the coordinate for information usefulness. A \emph{geometry limit} arises when target motion outpaces information transport, causing tracking error to approach a bounded saturation regime where communication improvements alone have diminishing returns. Together, these limits provide a kinetic interpretation of encounter-based coordination---\textit{information must first become accessible through robot encounters, then remain useful despite target motion, and finally remain bounded by task geometry.}

A related line of work seeks low-dimensional, interpretable predictors of multi-robot performance~\cite{xin2025towards}. Our paper follows that spirit, but with physically derived coordinates rather than purely data-driven ones. We use controlled simulations to isolate the proposed limits and a joint factorial parameter study with held-out parameter-level validation to test whether the kinetic regime variables predict performance under unseen team, environment, communication, and target-speed settings.

This paper makes three contributions. First, it formulates encounter-only communication as a kinetic information-transport process induced by robot motion. Second, it develops an access--staleness--geometry regime framework that explains when information can spread, when it remains useful, and when tracking error saturates against task geometry. Third, it validates this framework across controlled sweeps, mixed parameter variation, held-out deployment settings, and structured patrol motion, and translates the resulting limits into design guidance for encounter-based multi-robot systems.






\section{Problem Setup and Regime Variables}
\label{sec:problem}

We consider $N$ mobile robots operating in a square planar environment $\mathcal{E}\subset\mathbb{R}^2$ with side length $L$, area $\area=L^2$, and maximum separation $\dmax=L\sqrt{2}$. Robots move at characteristic speed $\vr$, while a single target moves at speed $\vt$. We do not assume a fixed ordering between $\vt$ and $\vr$; instead, $\vt$ is varied to study regimes in which target motion is slow, comparable to, or faster than information transport. Each robot directly observes the target only when it lies within sensing radius $\rs$; otherwise, it relies on the most recent timestamped target estimate obtained through direct sensing or encounter-based exchange. Communication occurs only through proximity encounters, i.e., two robots can exchange their stored estimates when their separation is at most the communication radius $\rc$, after which both retain the fresher estimate. 
\subsection{Performance and Freshness Metrics}

Let $\hat{\mathbf{p}}_i(t)$ denote robot $i$'s estimate of the target position, and let $\mathbf{p}_{\mathrm{t}}(t)$ denote the true target position. The primary task metric is the steady-state normalized tracking error
\begin{equation}
    \err =
    \frac{1}{\dmax}\,
    \Eop\!\left[
      \left\|\hat{\mathbf{p}}_i(t)-\mathbf{p}_{\mathrm{t}}(t)\right\|
    \right].
    \label{eq:norm_error}
\end{equation}
The information-freshness metric is Age of Information (AoI). Let $s_i(t)$ denote the timestamp of the estimate currently stored by robot $i$ at time $t$. The instantaneous AoI of robot $i$ is
\begin{equation}
    \AoI_i(t)=t-s_i(t).
    \label{eq:aoi_robot}
\end{equation}
We define the instantaneous team-averaged AoI as
\begin{equation}
    \AoI(t)=\frac{1}{N}\sum_{i=1}^{N}\AoI_i(t).
    \label{eq:aoi_team}
\end{equation}
We denote by $\AoIbar$ the steady-state time average of $\AoI(t)$, computed in simulation over the steady-state evaluation window.
\subsection{Regime Variables}

The goal of the kinetic theory is not to predict tracking error from every system parameter independently, but to identify coordinates that organize the dominant operating regimes. We therefore seek variables that answer two physical questions---\textit{can target information reach the team, and is that information still useful when it arrives?} These questions define the two primary regime coordinates used throughout the paper.

The first coordinate is communication coverage,
\begin{equation}
    \Lam = \frac{N\pi\rc^2}{\area},
    \label{eq:lambda_def}
\end{equation}
{which is the nominal total communication-disk area of the team, $N\pi\rc^2$, normalized by the environment area $\area$.} It increases with the number of robots and communication range, and decreases with operating area. We use $\Lam$ as the \emph{access coordinate} because it captures whether robots meet often enough for target observations to spread beyond the robots that directly sensed them.

The second coordinate is normalized staleness,
\begin{equation}
    \Xnorm = \frac{\vt\,\AoIbar}{\dmax},
    \label{eq:x_def}
\end{equation}
{which converts information age into a normalized target-displacement scale. Specifically, $\vt\AoIbar$ is the characteristic distance the target moves during the lifetime of an estimate, and $\dmax$ normalizes this distance by the size of the environment.} We use $\Xnorm$ as the \emph{staleness coordinate} because it measures how old information becomes relative to target motion and environment scale. {Thus, $\Xnorm$ is a physical proxy for the loss of tracking value caused by stale information.} Together, $\Lam$ and $\Xnorm$ form an access--staleness coordinate system for encounter-based tracking. The two mechanisms are coupled in the task but physically distinct. This separation is central to the kinetic interpretation developed in the following section.

\textbf{Problem Statement:} \textit{Given encounter-only communication, what physical transport mechanisms determine whether target information reaches the team, remains useful for tracking, and eventually saturates against geometry and target dynamics?} We study this question by evaluating whether the kinetic coordinates $(\Lam,\Xnorm)$ predict normalized tracking error across team, environment, communication, and target-speed settings.
\section{Kinetic Theory of Information Propagation}
\label{sec:theory}

This section develops a first-order kinetic theory of encounter-based information propagation, treating robots as moving carriers of timestamped target information and encounters as information-carrying collisions. The theory identifies three limiting mechanisms: (i)~\emph{access limit} from sparse encounters, (ii)~\emph{staleness limit} from finite information age, and (iii)~\emph{geometry limit} from bounded task space and target motion.

\subsection{Kinetic Transport Structure}
\label{sec:theory:dimensionless}

The regime variables in \cref{sec:problem} separate encounter-based tracking into two transport questions: whether target information becomes accessible through robot encounters, and whether that information remains fresh enough to be useful. The kinetic theory below explains the physical origin of these coordinates. Encounter geometry gives the access scaling for $\Lam$, AoI scaling connects encounter opportunity to information age, and normalized staleness $\Xnorm$ converts that age into target displacement.

At the regime level, we therefore write the tracking response generically as
\begin{equation}
    \err \approx f(\Lam,\Xnorm),
    \label{eq:regime_view}
\end{equation}
where $f$ is not assumed to be a universal closed-form law. Instead, it represents the empirical response organized by the two dominant mechanisms: access through encounters and loss of information value through staleness.

\subsection{Encounter Geometry and the Access Limit}
\label{sec:theory:encounter}

We quantify encounter opportunity by the swept-area construction shown in \Cref{fig:encounter_swept_area}. In the relative-motion frame, one robot is fixed and the other moves with relative velocity $\mathbf{v}_{\mathrm{rel}}$. Over a short interval $\Delta t$, the set of initial relative positions that lead to an encounter is approximated by a swept region of width $2\rc$ and length $\|\mathbf{v}_{\mathrm{rel}}\|\Delta t$.

\begin{figure}[t]
    \centering
    \includegraphics[width=0.9\linewidth]{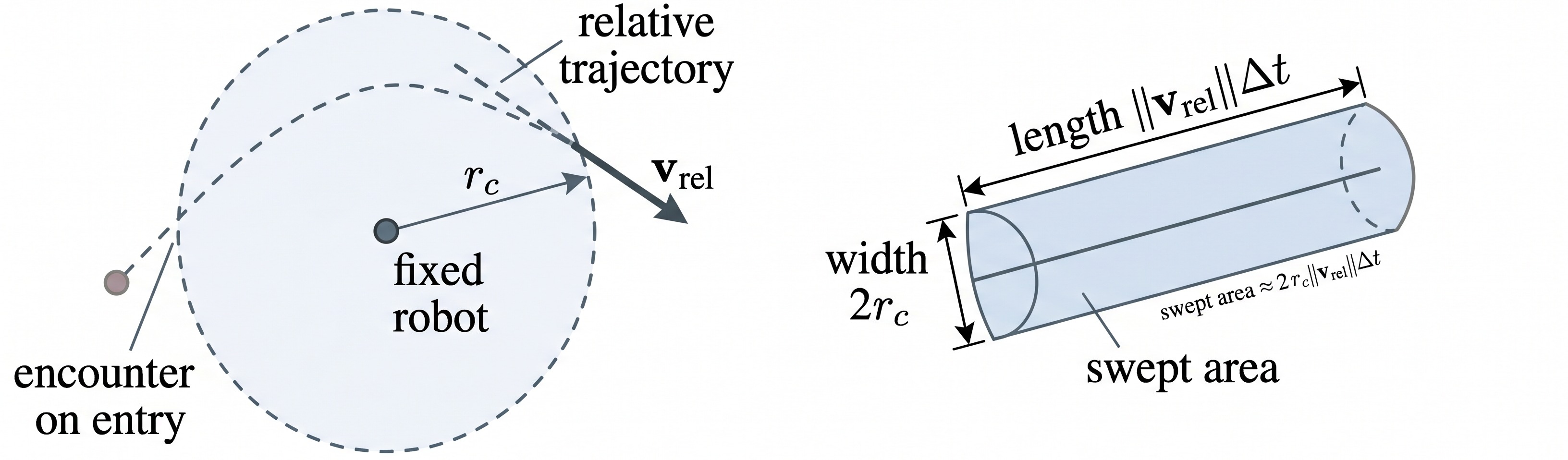}
    \caption{\textbf{Geometric derivation of the pairwise encounter rate.}
    In the relative-motion frame, an encounter occurs when the relative trajectory enters the communication disk of radius $\rc$. Over a short interval $\Delta t$, the swept region has width $2\rc$ and length $\|\mathbf{v}_{\mathrm{rel}}\|\Delta t$}.
    \label{fig:encounter_swept_area}
\end{figure}

\begin{lemma}[Pairwise encounter rate]
\label{lem:encounter_rate}
Consider two agents with independent isotropic headings and common speed $\vr$ in a large arena of area $\area$, with $\rc\ll\sqrt{\area}$. The pairwise encounter rate is
\begin{equation}
  \lenc =
  \frac{8\vr\rc}{\pi\area}
  + O\!\left(\frac{\rc}{\sqrt{\area}}\right).
  \label{eq:encounter_rate}
\end{equation}
\end{lemma}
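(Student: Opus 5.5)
The plan is the classical kinetic-theory collision-rate computation carried out in the relative frame of \Cref{fig:encounter_swept_area}, followed by an average over the relative heading. First fix robot $j$ at the origin, so robot $i$ moves with relative velocity $\mathbf{v}_{\mathrm{rel}}=\mathbf{v}_i-\mathbf{v}_j$. By the swept-area construction, over a short interval $\Delta t$ the set of initial relative positions that lead to $\|\mathbf{x}_i-\mathbf{x}_j\|\le\rc$ is a strip of width $2\rc$ and length $\|\mathbf{v}_{\mathrm{rel}}\|\Delta t$. The area of this strip is $2\rc\|\mathbf{v}_{\mathrm{rel}}\|\Delta t$, up to end-cap corrections of order $\rc^2$ that are absorbed later.

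The second step uses the large-arena assumption. Treat the position of robot $i$ as approximately uniform over $\mathcal{E}$, independent of robot $j$, which is the usual molecular-chaos assumption. The probability that an encounter begins in $[t,t+\Delta t]$ is then the strip area divided by $\area$. Dividing by $\Delta t$ and conditioning on the headings gives the instantaneous rate $2\rc\|\mathbf{v}_{\mathrm{rel}}\|/\area$.

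The third step is the heading average. With independent isotropic headings $\theta_i,\theta_j$ and common speed $\vr$, only the difference $\phi=\theta_i-\theta_j$ matters, and it is uniform on $[0,2\pi)$. This gives
\[
\|\mathbf{v}_{\mathrm{rel}}\|=\vr\sqrt{2-2\cos\phi}=2\vr\left|\sin(\phi/2)\right|.
\]
Its mean is
\[
\Eop\|\mathbf{v}_{\mathrm{rel}}\|=\frac{1}{2\pi}\int_0^{2\pi}2\vr\left|\sin(\phi/2)\right|\,d\phi=\frac{4\vr}{\pi}.
\]
Substituting yields $\lenc=\frac{2\rc}{\area}\cdot\frac{4\vr}{\pi}=\frac{8\vr\rc}{\pi\area}$, which is the leading term in \eqref{eq:encounter_rate}.

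The main obstacle is justifying the error term. I would identify three sources of correction.
\begin{enumerate*}[label=(\roman*)]
\item Boundary effects: relative positions near the walls of $\mathcal{E}$, where the strip is truncated or robots reflect, occupy a fraction $O(\rc/L)=O(\rc/\sqrt{\area})$ of the domain.
\item End-cap and re-encounter effects: these come from the $O(\rc^2)$ area corrections, and from treating an encounter as the entry event rather than occupancy of the disk.
\item Non-uniformity of the spatial distribution induced by the motion model.
\end{enumerate*}

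I would handle (i) by restricting to pairs at distance at least $\rc$ from the boundary and bounding the excluded measure by $4L\rc/L^2$. I would handle (ii) by noting that it enters at relative order $\rc/(\vr\Delta t)$, which is controlled in the limit that defines a rate. For (iii) I would simply assume stationary uniformity as a modeling hypothesis, since the lemma is stated at first order.

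One point needs checking. As literally written, the error $O(\rc/\sqrt{\area})$ is dimensionally an absolute quantity, whereas the corrections above are naturally relative to the leading term. I would therefore interpret it as a relative correction, giving
\[
\lenc=\frac{8\vr\rc}{\pi\area}\Bigl(1+O\bigl(\rc/\sqrt{\area}\bigr)\Bigr),
\]
and state this interpretation explicitly.
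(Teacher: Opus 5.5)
Your proposal is correct and follows the paper's proof step for step. Both work in the relative frame with a swept strip of width $2\rc$ and length $\|\mathbf{v}_{\mathrm{rel}}\|\Delta t$, take the mean relative speed $4\vr/\pi$ from the uniform heading difference, and divide by $\area$ with $\Delta t\to 0$. Your boundary estimate and your reading of $O(\rc/\sqrt{\area})$ as a relative correction go beyond the paper, which does not justify the error term at all.

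One fix is needed in item (ii). The ratio $\rc/(\vr\Delta t)$ diverges as $\Delta t\to 0$, so it is not ``controlled in the limit that defines a rate''. The right observation is this: if an encounter is counted as entry into the disk, the set of entering initial positions is the stadium (the disk swept along the segment) minus the disk itself. That set has area exactly $2\rc\|\mathbf{v}_{\mathrm{rel}}\|\Delta t$, so there is no end-cap correction to absorb.
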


\begin{proof}
In the relative frame, the speed between two equal-speed agents with heading difference $\phi$ is
\begin{equation}
  \|\mathbf{v}_1-\mathbf{v}_2\|
  =
  2\vr|\sin(\phi/2)|.
\end{equation}
For $\phi$ uniform on $[0,2\pi)$,
\begin{equation}
  \Eop[\|\mathbf{v}_{\mathrm{rel}}\|]
  =
  \frac{4\vr}{\pi}.
\end{equation}
As illustrated in \Cref{fig:encounter_swept_area}, during a short interval $\Delta t$, an encounter occurs if the initial relative position lies inside a swept region of width $2\rc$ and length $\|\mathbf{v}_{\mathrm{rel}}\|\Delta t$. Dividing the expected swept area by $\area$ and taking $\Delta t\rightarrow0$ yields \cref{eq:encounter_rate}. \qed
\end{proof}

For $N$ robots, the per-robot encounter opportunity scales as
\begin{equation}
  \li
  \approx
  (N-1)\lenc
  \approx
  \frac{8N\vr\rc}{\pi\area}
  =
  \frac{8\vr\Lam}{\pi^2\rc}.
  \label{eq:per_robot_rate}
\end{equation}

{This scaling gives the access interpretation of $\Lam$ as a normalized communication footprint that increases with team size and communication range, decreases with operating area, and controls the physical opportunity for information to move through the team. This access mechanism suggests a coverage transition. At low $\Lam$, target observations remain local because encounters are too sparse for team-wide propagation; at higher $\Lam$, information becomes broadly accessible and performance is limited mainly by staleness and target dynamics. We do not treat this transition as a fixed universal threshold, since mixed variation in $\vt$, $\rc$, $N$, and $L$ can broaden threshold-like behavior into an empirical transition band. The experiments estimate this band and test whether $\Lam$ organizes the access-limited regime.}
\subsection{AoI and the Staleness {Coordinate}}
\label{sec:theory:aoi}

Access alone does not guarantee useful information. Even when target estimates can propagate through encounters, they may arrive after the target has moved far enough that the estimate has lost value. {The role of AoI is to quantify this delay between when target information was generated and when it is used by a robot. We therefore use the encounter-rate scaling from \cref{sec:theory:encounter} to obtain a first-order estimate of information~age.}

From the per-robot encounter scaling in \cref{eq:per_robot_rate}, a robot receives encounter opportunities at rate $\li$. {In an idealized setting, suppose that each useful encounter provides an independent fresh target update and that useful updates arrive as a memoryless renewal process with rate proportional to $\li$. Then the characteristic inter-update time, and hence the mean AoI scaling, is}
\begin{equation}
  \Eop[\mathrm{AoI}] \sim \frac{1}{\li}
  =
  \frac{\pi^2\rc}{8\vr\Lam}.
  \label{eq:ideal_aoi_scaling}
\end{equation}

{This ideal expression gives the dependence on encounter opportunity but underestimates the age in realistic encounter propagation.} Encountered robots may already carry stale estimates, observations may require multiple encounters to reach distant robots, and robot motion may mix the team imperfectly. We collect these effects in a dimensionless inefficiency factor $\Gamma\geq 1$, yielding the first-order AoI scaling
\begin{equation}
  \Eop[\mathrm{AoI}]
  \approx
  \Gamma\,
  \frac{\pi^2\rc}{8\vr\Lam}.
  \label{eq:aoi_scaling_revised}
\end{equation}
Thus, \cref{eq:aoi_scaling_revised} should be interpreted as a \textit{first-order scaling law}---information age decreases with encounter opportunity and increases with stale relays, multi-hop propagation, and imperfect mixing.


\subsection{Staleness Response}
\label{sec:theory:staleness_response}

{We now describe the expected response of tracking error to $\Xnorm$ coordinate once information is accessible.} In the staleness-limited regime, older information should not improve tracking performance; therefore, normalized tracking error is expected to be nondecreasing with normalized staleness,
\begin{equation}
  \err = f(\Xnorm), \qquad f'(\Xnorm)\geq 0.
  \label{eq:monotone_staleness}
\end{equation}
{The condition $f'(\Xnorm)\geq 0$ means that error should increase or saturate as information becomes stale; it should not decrease solely because the estimate is older.}

For small or moderate stale displacement, a local expansion gives
\begin{equation}
  \err \approx e_0 + a\Xnorm,
  \label{eq:local_linear}
\end{equation}
which corresponds to the intuitive regime where additional target motion during information age produces approximately proportional tracking degradation.

This local relation is not expected to hold globally. Tracking error is bounded by the environment, direct sensing periodically refreshes some robots, and target motion can make displacement grow sublinearly over longer horizons. Thus, the theory predicts a qualitative response shape rather than a universal linear law: error should grow with staleness in the connected regime, but the response should eventually bend toward geometry-controlled saturation. The experiments test this prediction by comparing linear, sublinear, saturating, and piecewise response models.


\subsection{Geometry Limit}
\label{sec:theory:saturation}

The third limit appears when target motion outpaces information transport. In this regime, old estimates become nearly uninformative, and additional communication has diminishing returns unless it also reduces age or improves prediction. {The limiting reference comes from the largest-error case in which a robot's estimate is effectively decorrelated from the true target position. In that case, the estimate location and target location can be approximated as two independent uniform samples from the square domain.} For a square of side length $L$, the expected distance between two independent uniform points is approximately $0.5214L$. Normalizing by the maximum possible separation $\dmax=L\sqrt{2}$ gives
\begin{equation}
  \err_\infty
  =
  \frac{\Eop[\|\mathbf{x}_r-\mathbf{x}_t\|]}{\dmax}
  \approx
  \frac{0.5214L}{L\sqrt{2}}
  \approx 0.369.
  \label{eq:saturation_limit}
\end{equation}
{Thus, $\err_\infty$ is not a fitted parameter or an exact prediction for every experiment. It is a geometry-controlled reference value for the error expected when stale information carries little useful correlation with the current target position.} Observed averages need not equal this value exactly because direct sensing, finite trajectories, and correlated target motion can reduce error. The key implication is that tracking error cannot grow linearly with staleness indefinitely; once target motion dominates information transport, bounded geometry and intermittent direct sensing determine a saturation regime.
\section{Experimental Design}
\label{sec:experiments}

The experiments test the kinetic theory at two levels. First, controlled sweeps isolate the three predicted limits: access, staleness, and geometry. Second, a joint factorial parameter study tests whether the regime coordinates $(\Lam,\Xnorm)$ remain predictive when team size, environment scale, communication radius, and target speed vary together.

\paragraph{Simulation model.}
Robots and the target move in a square bounded domain with reflective walls. Unless otherwise specified, robots move with speed $\vr=2.0~\mathrm{m/s}$, while the target speed $\vt$ is varied across the controlled sweeps and factorial study. Each simulation runs for $200~\mathrm{s}$ with timestep $0.1~\mathrm{s}$, and steady-state metrics are computed over the final portion of the run. Robots directly observe the target within sensing radius $\rs$ and otherwise rely on the freshest timestamped estimate obtained through sensing or proximity encounters. In the encounter-only protocol, robots exchange estimates only when their separation is below $\rc$ and retain the fresher estimate. For each condition, we record steady-state normalized tracking error $\err$, mean AoI, informed fraction, fresh-information fraction, and mean neighbor count, and compute the kinetic regime coordinates $\Lam$ and $\Xnorm$ using \cref{eq:lambda_def,eq:x_def}.
\paragraph{Mechanism-isolation sweeps.}
The controlled campaign varies one parameter at a time around the base configuration $N=40$, $L=100~\mathrm{m}$, $\rc=10~\mathrm{m}$, $\vt=2.0~\mathrm{m/s}$, $\vr=2.0~\mathrm{m/s}$, and $\rs=5.0~\mathrm{m}$. Sweeps over $N$, $L$, and $\rc$ test the access limit by changing encounter opportunity, while sweeps over $\vt$ test the staleness and geometry limits by changing target displacement during information age. The sweep ranges overlap the factorial study, so the controlled experiments isolate individual mechanisms while the factorial study tests their joint effect. A focused coverage sweep resolves the access transition, and no-sharing and oracle baselines bound the performance of the encounter-only protocol.
\paragraph{Joint factorial study.}
To test whether the access--staleness structure survives mixed variation, we run a factorial study over $N\in\{10,20,40,80\}$, $L\in\{50,100,150,200\}~\mathrm{m}$, $\rc\in\{5,10,15,20\}~\mathrm{m}$, and $\vt\in\{0.5,1,2,4\}~\mathrm{m/s}$, with $\vr=2.0~\mathrm{m/s}$ and $\rs=5.0~\mathrm{m}$ fixed. The resulting $256$ mixed conditions, with $10$ seeds each, test whether $\Lam$ and $\Xnorm$ organize performance under joint parameter variation.
\paragraph{Validation tests.}
We test whether $\Xnorm$, $\Lam$, or their joint coordinates $(\Lam,\Xnorm)$ best predict normalized tracking error. For each feature set, we fit the same quadratic ridge-regression model to factorial-study condition means and evaluate five-fold cross-validation using RMSE, MAE, and $R^2$. We also perform held-out parameter validation by withholding one level of $N$, $L$, $\rc$, or $\vt$ and fitting on the remaining levels, which tests generalization to unseen team sizes, environment scales, communication ranges, and target speeds.

\section{Results}
\label{sec:results}
\begin{figure}[t]
  \centering
  \includegraphics[width=\textwidth]{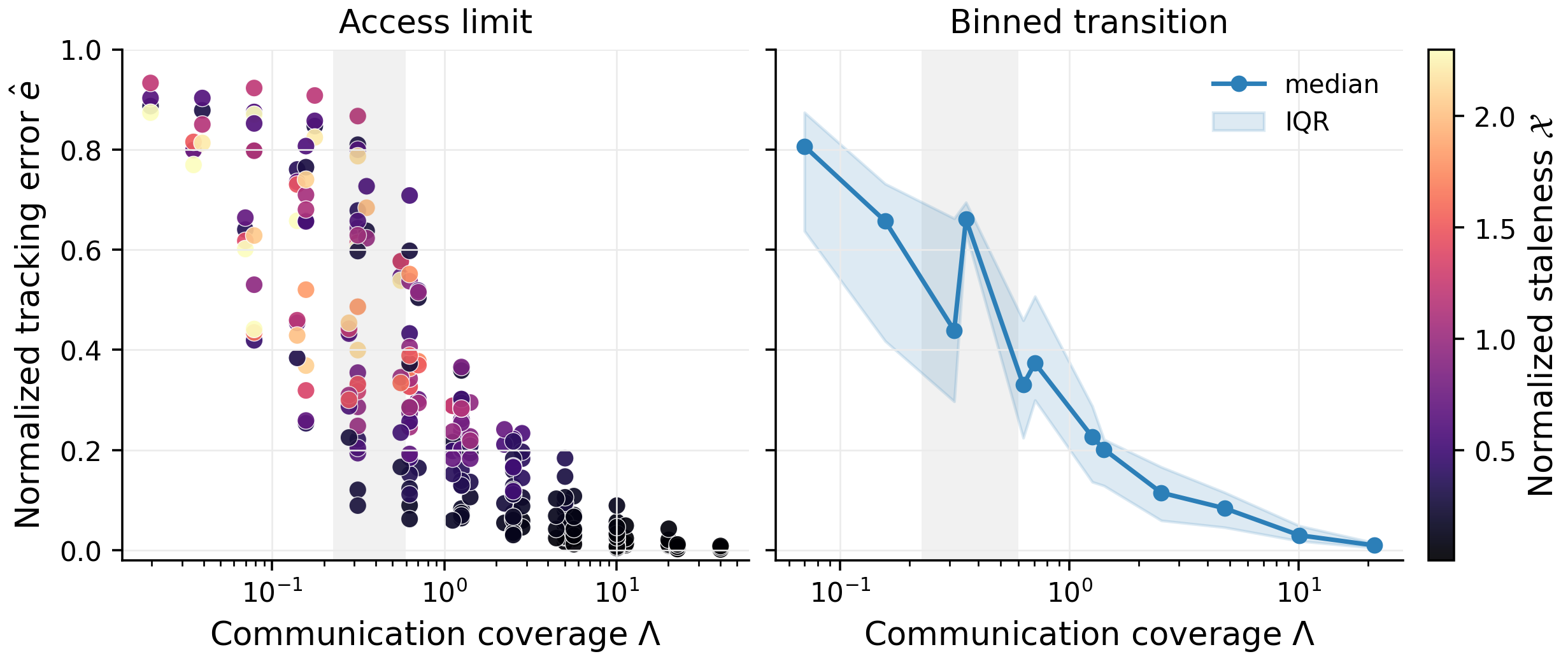}
  \caption{\textbf{Access limit under mixed variation.}
    Normalized Tracking error decreases with communication coverage $\Lam$ under joint parameter variation. Points are colored by normalized staleness $\Xnorm$, and the shaded region marks the empirical transition band $\Lam\in[0.226,0.597]$.}  \label{fig:access_transition_band}
\end{figure}

The results test the kinetic theory of encounter-based information propagation through the three predicted limits: access, staleness, and geometry. We first evaluate whether communication coverage $\Lam$ organizes the access transition, then test whether the joint coordinates $(\Lam,\Xnorm)$ predict tracking error better than either coordinate alone. We then examine the staleness response and the geometry-controlled saturation regime.

\subsection{Access Limit: Coverage Controls Information Availability}
\label{sec:results:transition}

\Cref{fig:access_transition_band} evaluates the access limit under joint parameter variation. At low communication coverage, tracking error remains high because target observations tend to stay local. In other words, robots do not encounter one another often enough for information to propagate through the team. As $\Lam$ increases, tracking error decreases sharply and then levels off, indicating that information has become broadly accessible and that other mechanisms begin to dominate.

The transition is not best interpreted as a fixed universal threshold. Estimating the coverage-dependent transition without assuming a sharp boundary gives an empirical band $\Lam\in[0.226,0.597]$ with midpoint $0.367$. This supports the access-limit interpretation: coverage governs whether information can spread through the team, but the apparent boundary broadens when target speed, environment scale, and staleness vary together.

\subsection{Access--Staleness Regime Map}
\label{sec:results:regime_map}

\Cref{fig:access_staleness_regime_map} shows the empirical regime map induced by the kinetic coordinates. Each point is a factorial-study condition averaged over seeds. Low $\Lam$ and high $\Xnorm$ correspond to high tracking error, while high $\Lam$ and low $\Xnorm$ correspond to low tracking error. This pattern matches the two-stage structure predicted by the theory---\textit{target information must first become accessible through encounters, and then it must remain useful despite target motion during the information age}. The map also clarifies why coverage is expected to dominate many conditions. If $\Lam$ is too low, the system fails at the access stage and staleness cannot fully explain performance. Once access is sufficient, however, variation in $\Xnorm$ captures how much value is lost while information propagates.

\begin{figure}[t]
  \centering
  \includegraphics[width=0.65\textwidth]{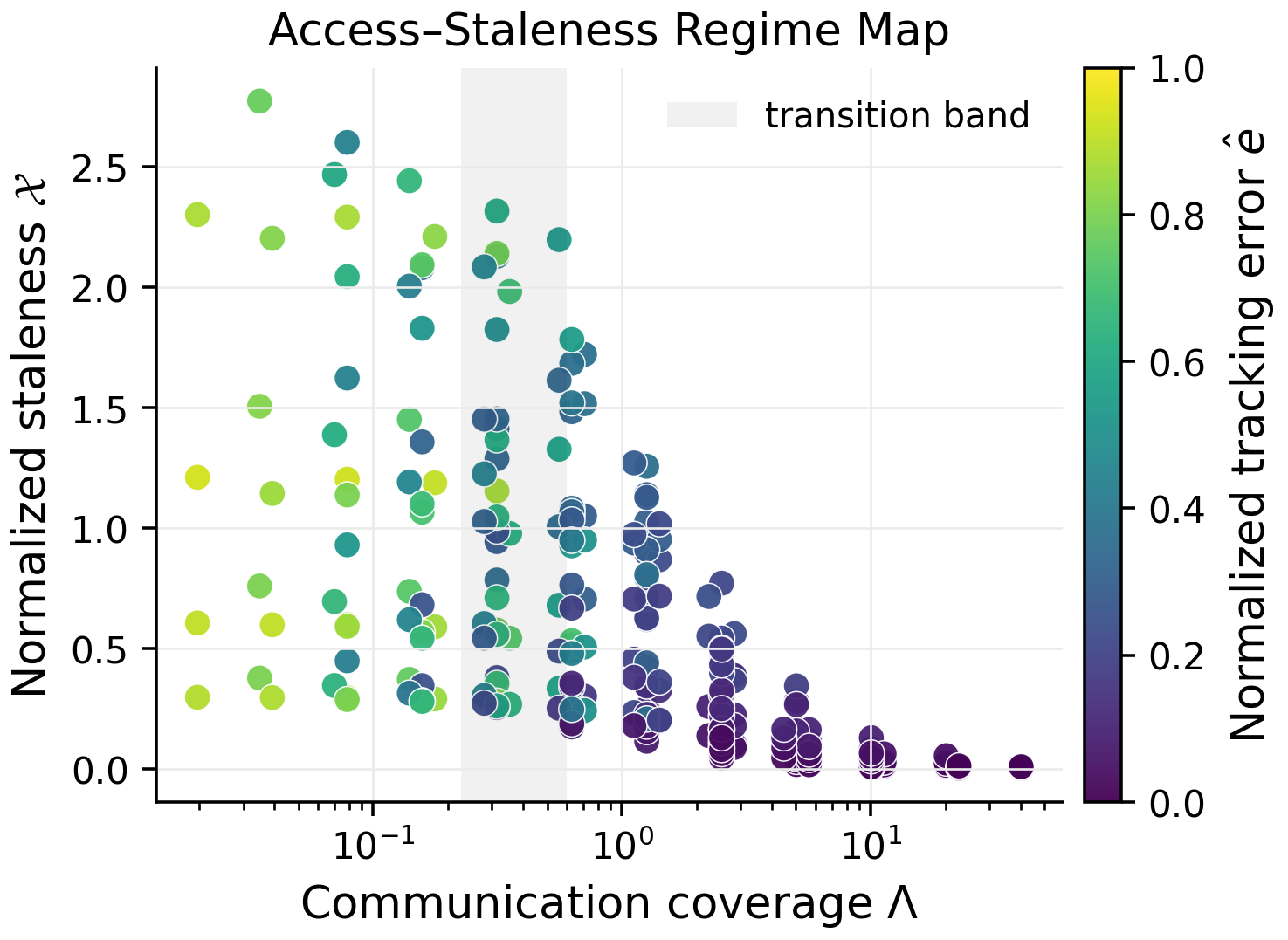}
  \caption{\textbf{Access--staleness regime map.}
  Points show factorial-study conditions in $(\Lam,\Xnorm)$, colored by normalized tracking error $\err$. High error occurs with poor coverage or stale information, while low error requires both higher coverage and lower staleness. The shaded band marks the empirical access-transition region.}
  \label{fig:access_staleness_regime_map}
\end{figure}

\subsection{Predictive Validation of the Regime Coordinates}
\label{sec:results:prediction}

We next test whether the kinetic coordinates predict normalized tracking error across mixed parameter settings. \Cref{tab:predictive_comparison} and \Cref{fig:predictive_comparison} compare three feature sets: $\Xnorm$, $\Lam$, and $(\Lam,\Xnorm)$. The joint coordinates achieve the best condition-level cross-validation performance, with RMSE $0.133$, MAE $0.096$, and $R^2=0.760$. Coverage alone is close, with RMSE $0.139$ and $R^2=0.739$, while staleness alone performs substantially worse. \textbf{This result is physically meaningful rather than disappointing.} Access is a prerequisite limit. When information cannot spread through the team, coverage alone explains much of the error. The additional improvement from $(\Lam,\Xnorm)$ shows that staleness still contributes once information becomes accessible.

\begin{table}[t]
  \centering
    \caption{\textbf{Condition-level validation.}
    Five-fold cross-validation on factorial-study condition means.}
  \label{tab:predictive_comparison}
  \begin{tabular}{lccc}
    \toprule
    Feature set & CV RMSE & CV MAE & CV $R^2$ \\
    \midrule
    $\Xnorm$ & 0.215 & 0.167 & 0.376 \\
    $\Lam$ & 0.139 & 0.106 & 0.739 \\
    $(\Lam,\Xnorm)$ & \textbf{0.133} & \textbf{0.096} & \textbf{0.760} \\
    \bottomrule
  \end{tabular}
\end{table}

\begin{figure}[t]
  \centering
  \includegraphics[width=\textwidth]{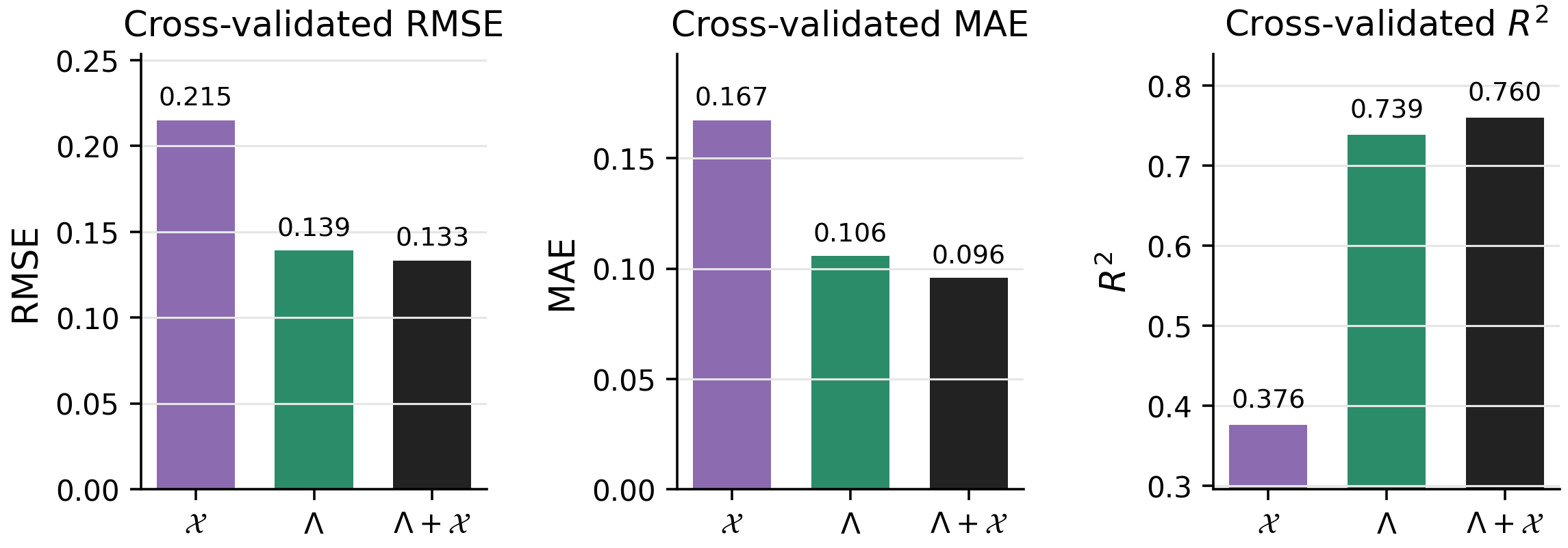}
  \caption{\textbf{Predictive comparison of kinetic coordinates.}
  Cross-validated RMSE, MAE, and $R^2$ for factorial-study condition means using the same quadratic regression form for each feature set. The joint coordinates $(\Lam,\Xnorm)$ provide the best prediction, while $\Lam$ alone remains close because access is the dominant failure mode across much of the tested parameter grid.}
  \label{fig:predictive_comparison}
\end{figure}

As a stricter test, we use held-out parameter validation. For each parameter $N$, $L$, $\rc$, and $\vt$, the regression is fit using three parameter levels and evaluated on the held-out fourth level. \Cref{tab:heldout_validation} shows the same overall ordering: $(\Lam,\Xnorm)$ performs best, with RMSE $0.155$ compared with $0.161$ for $\Lam$ alone. The improvement is modest but consistent with the theory---\textit{coverage determines whether information can propagate, while staleness adds explanatory power once access is available.}

\begin{table}[t]
  \centering
  \caption{\textbf{Held-out parameter-level validation.}
    Metrics are averaged over held-out levels of $N$, $L$, $\rc$, and $\vt$.}
  \label{tab:heldout_validation}
  \begin{tabular}{lccc}
    \toprule
    Feature set & Test RMSE & Test MAE & Test $R^2$ \\
    \midrule
    $\Xnorm$ & 0.235 & 0.188 & -0.215 \\
    $\Lam$ & 0.161 & 0.128 & 0.353 \\
    $(\Lam,\Xnorm)$ & \textbf{0.155} & \textbf{0.118} & \textbf{0.438} \\
    \bottomrule
  \end{tabular}
\end{table}
\subsection{Robustness to Structured Search Motion}
\label{sec:results:motion_robustness}

{The main experiments use correlated random-walk motion, which gives a clean kinetic setting for deriving the encounter-rate model. To test whether the regime picture is an artifact of this particle-like motion, we repeated the analysis with independent randomized lawnmower patrols, in which each robot follows a boustrophedon coverage path with randomized initial position, sweep direction, lane offset, and phase. This policy is structured, decentralized, and communication-free in which robots follow coverage-style sweeps rather than isotropic random trajectories, while the sensing model, target motion, encounter exchange rule, AoI computation, and tracking-error metric remain unchanged.}

{\Cref{fig:motion_policy_robustness} shows that the access--staleness structure persists under this structured patrol. In both motion policies, normalized tracking error decreases as communication coverage $\Lam$ increases. For randomized lawnmower patrol, $\log_{10}\Lam$ remains strongly negatively correlated with tracking error, with Spearman correlation $\rho=-0.734$ and a $95\%$ bootstrap confidence interval $[-0.842,-0.576]$; median error decreases from $0.262$ in the lowest-$\Lam$ third to $0.059$ in the highest-$\Lam$ third. The regime maps show the same qualitative organization in which high-error conditions occur at low $\Lam$ or high $\Xnorm$, while low-error conditions concentrate at higher $\Lam$ and smaller $\Xnorm$. Thus, changing the motion policy changes the effective mixing and AoI levels, but does not remove the kinetic access--staleness structure for these two distinct policies.}
\begin{figure*}[t]
  \centering
  \includegraphics[width=\textwidth]{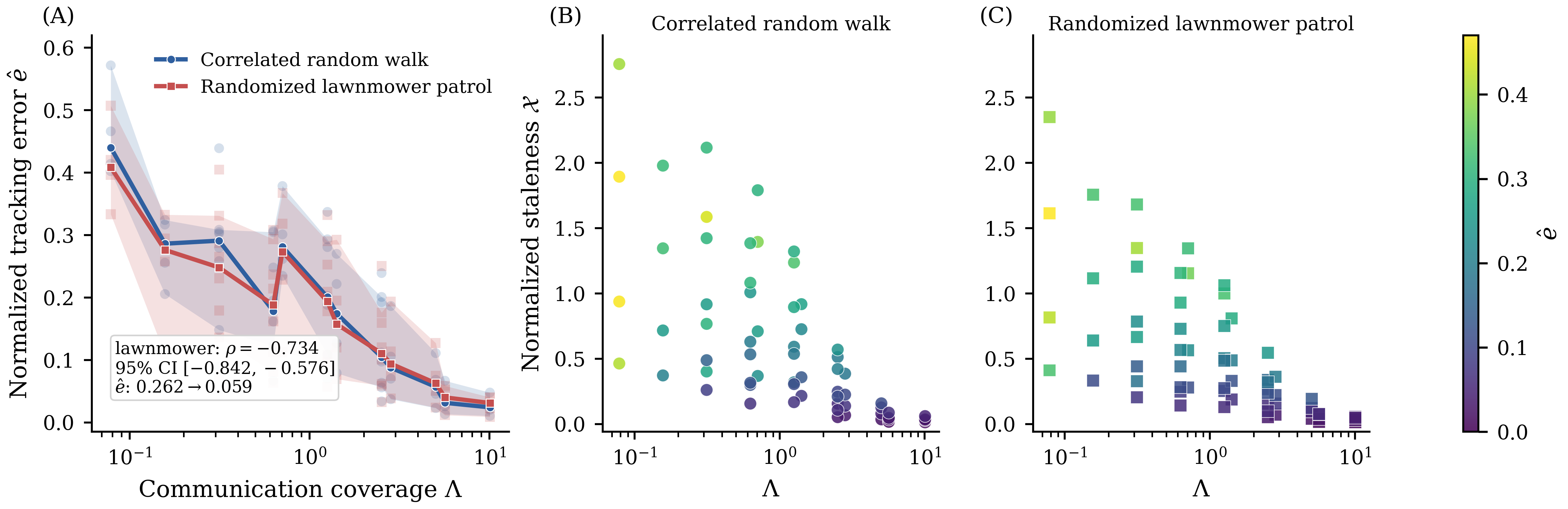}
    \caption{\textbf{Motion-policy robustness.}
    (A) Tracking error decreases with communication coverage $\Lam$ under both correlated random walk and randomized lawnmower patrol. (B,C) The same access--staleness structure appears under both policies: high error occurs at low $\Lam$ or high $\Xnorm$, while low error concentrates at higher $\Lam$ and lower $\Xnorm$.}
  \label{fig:motion_policy_robustness}
\end{figure*}
\subsection{Staleness Limit: Local Linearity and Global Saturation}
\label{sec:results:staleness}

The theory predicts local linearity: once information is accessible, additional target motion during information age should produce approximately proportional tracking degradation,
\begin{equation}
    \err \approx e_0 + a\Xnorm .
    \label{eq:results_linear_staleness}
\end{equation}
To test whether this local approximation holds globally, we compare it against representative sublinear and saturating response families,
\begin{align}
    \err &= e_0 + a\sqrt{\Xnorm}, \label{eq:results_sqrt_staleness}\\
    \err &= e_0 + a\Xnorm^b, \qquad 0<b<1, \label{eq:results_power_staleness}\\
    \err &= e_0 + (e_\infty-e_0)\left(1-\exp(-k\Xnorm)\right), \label{eq:results_exp_staleness}
\end{align}
together with logistic saturation and piecewise-linear fits. The logistic model captures smooth saturation, while the piecewise-linear model allows the slope to change across staleness regimes. These families are not separate theories; they are empirical probes of the same prediction that error should grow with staleness locally but bend toward saturation over broader operating regimes.

\begin{table}[t]
  \centering
  \caption{\textbf{Staleness response model families.}
  Group cross-validated RMSE on factorial-study data.}
  \label{tab:staleness_family}
  \begin{tabular}{lcc}
    \toprule
    Model & All conditions & $\Lam>0.60$ \\
    \midrule
    Linear & 0.257 & 0.139 \\
    Square-root & \textbf{0.243} & 0.132 \\
    Power law & 0.244 & 0.132 \\
    Exponential saturation & 0.245 & 0.131 \\
    Logistic & 0.253 & 0.132 \\
    Piecewise linear & 0.246 & \textbf{0.130} \\
    \bottomrule
  \end{tabular}
\end{table} 
\begin{figure}[t]
  \centering
  \includegraphics[width=\textwidth]{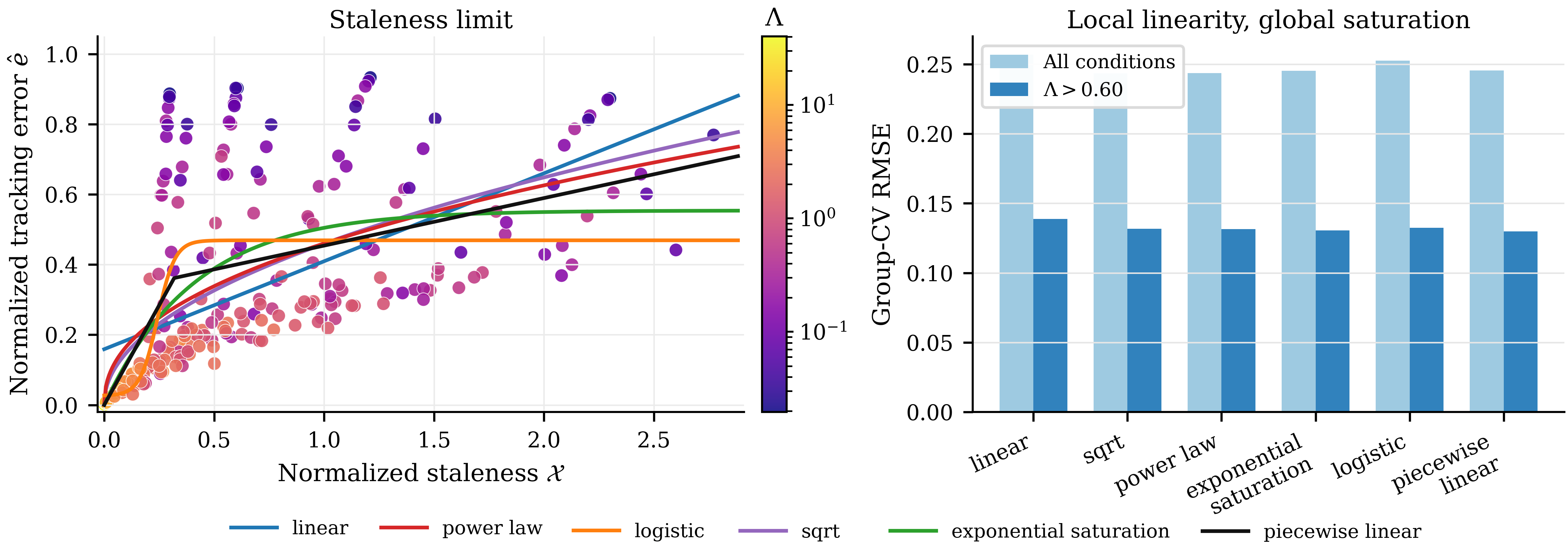}
  \caption{\textbf{Staleness response.}
  Normalized tracking error versus normalized staleness with candidate response families and cross-validated RMSE. Nonlinear saturating models provide a better fit than a global linear response, indicating geometry-driven saturation of the staleness response.}
  \label{fig:staleness_response_family}
\end{figure}

\Cref{fig:staleness_response_family} and \Cref{tab:staleness_family} show that a global linear model is not the best description of the staleness response. Over all conditions, the square-root family gives the lowest cross-validated RMSE, $0.243$ compared with $0.257$ for the linear model. A paired condition-bootstrap comparison gives an RMSE reduction of $0.0136$ with $95\%$ confidence interval $[0.0101,0.0168]$. In the connected regime $\Lam>0.60$, the piecewise-linear model gives the lowest RMSE, $0.130$ compared with $0.139$ for the linear model, with RMSE reduction $0.0089$ and $95\%$ confidence interval $[0.0056,0.0120]$. Because the leading nonlinear, saturating, and piecewise families are close to one another, we interpret the result as evidence that the staleness response bends away from global linearity, not as identification of a unique closed-form law.

\subsection{Geometry Limit: Saturation Under Fast Target Motion}
\label{sec:results:saturation}

\Cref{fig:geometry_saturation} evaluates the geometry limit using the target-speed sweep. As target speed increases, normalized tracking error rises but eventually approaches a bounded regime. The dotted line shows the geometry-only reference $\err_\infty\approx0.369$, corresponding to the expected normalized distance between two independent uniform points in a square domain. The local-encounter curve remains below this reference because direct sensing, finite trajectories, and target-motion correlations retain some information. The key implication is that communication improvements alone have diminishing returns once target motion outpaces information transport; further gains require reducing age, improving prediction, or strengthening sensing.

\begin{figure}[t]
  \centering
  \includegraphics[width=\textwidth]{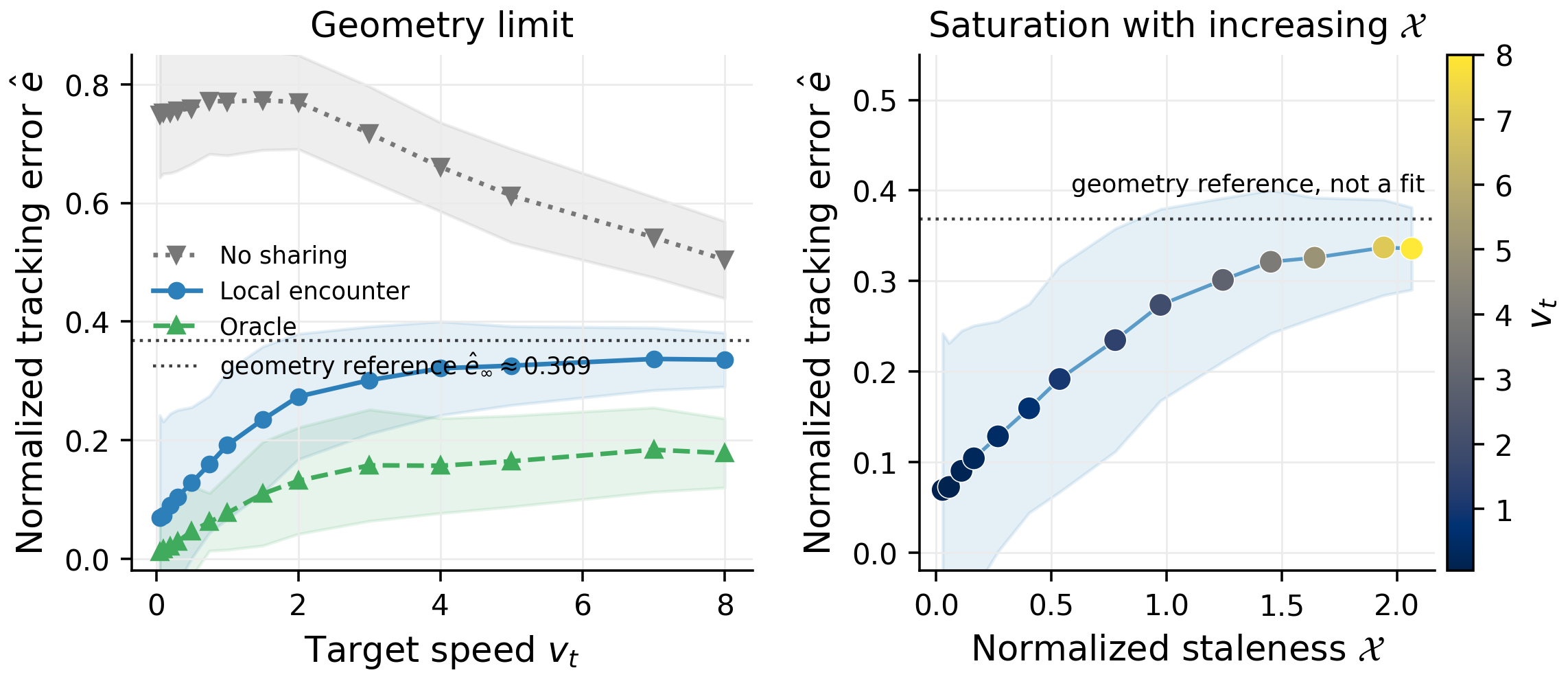}
  \caption{\textbf{Geometry limit.}
  Tracking error rises with target speed and normalized staleness but approaches a bounded regime. The dotted line marks the geometry reference $\err_\infty\approx0.369$.}
  \label{fig:geometry_saturation}
\end{figure}

\subsection{Design Choices from the Kinetic Limits}
\label{sec:results:design}

The access--staleness--geometry decomposition is not only explanatory; it also gives an engineering procedure for diagnosing and improving encounter-based tracking systems. A designer can first estimate the communication coverage $\Lam$ from team size, communication radius, and operating area, and estimate normalized staleness $\Xnorm$ from measured AoI and target speed. The location of the system in the $(\Lam,\Xnorm)$ regime map then indicates which intervention is most likely to improve performance.

If the system is \emph{access-limited}, $\Lam$ is low and target information does not spread beyond the robots that directly sense it. In this regime, improving the estimator or predictor alone is unlikely to solve the dominant failure mode because most robots never receive useful information. The design should instead increase encounter opportunity by adding robots, increasing communication radius, reducing operating area, improving motion mixing, or introducing relay and rendezvous behaviors.

If the system is \emph{staleness-limited}, $\Lam$ is sufficient for information to propagate but $\Xnorm$ remains high. In this regime, the problem is not whether information reaches the team, but whether it arrives before the target has moved too far. The design should reduce AoI or compensate for it by prioritizing fresh updates, increasing sensing frequency, shortening update paths, improving relay policies, or predicting target motion during information propagation.

{If the system is \emph{geometry-limited}, tracking error approaches the geometry-controlled reference $\err_\infty\approx0.369$ because target motion has largely decorrelated stale estimates from the current target position. In this regime, additional communication has diminishing returns unless it also reduces information age or improves sensing effectiveness. The design should therefore shift from merely increasing connectivity to improving target-dynamics prediction, increasing robot speed, enlarging sensing radius, or using active tracking policies that keep robots close to the target.}

Thus, the kinetic theory suggests a sequential design logic---first ensure information can become accessible, then ensure it remains fresh enough to be useful, and finally recognize when bounded geometry and target dynamics dominate the achievable error. This turns the regime map into a practical diagnostic tool for communication-constrained multi-robot tracking.

\section{Conclusion}
\label{sec:conclusion}

This paper introduces a kinetic theory of encounter-based information propagation for multi-robot systems operating in a communication-limited environment. When robots exchange information only through physical proximity, robot motion induces information-carrying encounters, analogous to collisions in kinetic theory. This viewpoint connects motion and geometry to encounter rates, encounter rates to information age, and information age to tracking error.

The theory identifies three limits: an \emph{access limit}, which determines whether target information can spread beyond the robots that directly sensed it; a \emph{staleness limit}, which determines whether propagated information remains useful after the target has moved; and a \emph{geometry limit}, which determines when bounded task space and target motion cause error to saturate. The results presented in this paper support this decomposition in which communication coverage $\Lam$ organizes the access transition, normalized staleness $\Xnorm$ captures loss of information value once access is available, and the joint coordinates $(\Lam,\Xnorm)$ provide the best predictive description of tracking error. The resulting design logic is sequential---first ensure information can spread, then ensure it remains fresh, and finally recognize when geometry and target dynamics dominate achievable performance.

The present validation has the limitation that it is simulation-based and considers a single target, homogeneous robots, simple sensing, correlated random-walk motion, and a freshest-estimate exchange protocol. Future work should derive the propagation inefficiency factor $\Gamma$ from motion mixing and gossip-path statistics, extend the framework to multi-target and heterogeneous teams, and test the access--staleness--geometry map on physical robot teams with realistic communication constraints. Overall, the kinetic theory provides a physics-grounded framework for predicting, interpreting, and designing encounter-based multi-robot coordination.
\section*{Acknowledgment}This work was funded by NSF grant CNS-2143312.

\bibliographystyle{ieeetr}
\bibliography{paper/references}

\end{document}